\newtheorem{definition}{Definition}
\newtheorem{lemma}{Lemma}
\newtheorem{theorem}{Theorem}
\newtheorem{corollary}{Corollary}
\newcommand{\reals}{\mathbb{R}}
\newcommand{\expectation}{\mathbb{E}}
\newcommand{\ylinkSimple}[1][\Phi]{Y^{#1}}
\newcommand{\ylink}[2][\Phi]{Y_{#2}^{#1}}
\newcommand{\ylinkEstSimple}[1][\Phi]{\Tilde{Y}^{#1}}
\newcommand{\ylinkEst}[2][\Phi]{\Tilde{Y}_{#2}^{#1}}
\newcommand{\algSimple}{L}
\newcommand{\alg}[1][t]{L_{#1}(h)}
\newcommand{\norm}[1]{\| #1 \|}
\newcommand{\supReward}{U}
    \@ifdefinable{\epsGenBlackwellCond}{\def\epsGenBlackwellCond/{$(\Phi, f, \epsilon)$-Blackwell condition}}
    \@ifdefinable{\epsGenBlackwellCondTitle}{\def\epsGenBlackwellCondTitle/{$(\Phi, f, \epsilon)$-Blackwell Condition}}
\title{Bounds for Approximate Regret-Matching Algorithms}
\author{
    Ryan D'Orazio,
    Dustin Morrill,
    James R. Wright \\
    Department of Computing Science \\
    University of Alberta \\
    \{rdorazio, morrill, james.wright\}@ualberta.ca
}
\begin{document}

\maketitle

\begin{abstract}
  A dominant approach to
  solving large imperfect-information games is
  \emph{Counterfactural Regret Minimization (CFR)}.
  In CFR,
  many regret minimization problems are combined to
  solve the game.
  For very large games, abstraction is typically
  needed to render CFR tractable.
  Abstractions are often manually tuned,
  possibly removing important strategic differences
  in the full game and harming performance.
  Function approximation provides a natural solution to finding
  good abstractions to approximate the full game.
  A common approach to incorporating function approximation
  is to learn the inputs needed for a regret minimizing algorithm, allowing
  for generalization across many regret minimization problems.
  This paper gives regret bounds when a regret minimizing algorithm uses estimates
  instead of true values.
  This form of analysis is the first to generalize to a larger class
  of $(\Phi, f)$-regret matching algorithms, and includes different forms of regret
  such as swap, internal, and external regret.
  We demonstrate how these results
  give a slightly tighter bound for Regression Regret-Matching (RRM), and present a
  novel bound for combining regression with Hedge.
\end{abstract}

\section{Introduction}
The dominant framework for approximating Nash equilibria
in extensive-form games with imperfect information is Counterfactual Regret Minimization (CFR), and it has successfully been used to solve and expertly play human-scale poker games~\cite{bowling2015heads,moravvcik2017deepstack,brown2018superhuman,brown2019superhuman}. This framework is built on the idea of decomposing a game into a network of simple regret minimizers~\cite{zinkevich2008regret,farina2019regret}.
For very large games, abstraction is typically used to reduce the size and yield a strategically similar game that is feasible to solve with CFR~\citep{zinkevich2008regret, waugh2009abstraction, johanson2013evaluating, ganzfried2013action}.

Function approximation is a natural generalization of abstraction.
In CFR, this amounts to estimating the regrets for each regret minimizer instead of storing them all in a table~\citep{waugh2015solving,morrill2016,deepCFR,neuRD,Exp_Descent,steinberger2019single}.
Function approximation can be competitive with domain specific state abstraction~\cite{waugh2015solving,morrill2016,deepCFR}, and
in some cases is able to outperform tabular CFR without abstraction if the players
are optimizing against their best responses~\cite{Exp_Descent}.

Combining regression and regret-minimization with applications to CFR was initially studied by
Waugh et. al.~\cite{waugh2015solving}, introducing the RRM theorem---giving a sufficient
condition for function approximator error to still achieve no external regret.
In this paper we generalize the RRM theorem to a larger class of regret-minimizers and $\Phi$-regret---a set of regret metrics that include
external regret, internal regret, and swap regret.
Extending to a larger class of regret-minimizers provides insight into the effectiveness of
combining function approximation and regret minimization---the effect of function approximation
error on the bounds will vary between algorithms.
Furthermore, extending to other algorithms can give theory for existing
or future methods.
For example, there has been interest in a functional version of hedge,
an algorithm within the studied class,
for general multiagent
and non-stationary settings that can outperform softmax policy gradient methods~\cite{neuRD}.
Extending to a more general class of regret metrics such as internal regret allows for
potentially-novel applications of regret minimization and function approximation
including finding an approximate correlated equilibrium~\cite{cesa2006prediction}.

\section{Preliminaries}
We adopt the notation from Greenwald et al. \cite{greenwald2006bounds} to describe an
online decision problem (ODP).
An ODP consists of a set of possible actions $A$ and set of possible rewards $\mathcal{R}$.
In this paper we assume a finite set of actions and
bounded\footnote{The restriction of positive rewards is without loss of generality and is only used for convenience.} $\mathcal{R} \subset \reals_+$ where
$\sup \mathcal{R} = \supReward$.
The tuple $(A, \mathcal{R})$ fully characterizes the problem and is referred to as a reward
system. Furthermore, let $\Pi$ denote the set of reward functions $r: A \mapsto \mathcal{R}$.

At each round $t$ an agent selects a distribution over actions $q_t \in \Delta(A)\footnote{$\Delta(A)$ is the set of all probability distributions over actions in $A$.}$, samples an action $a_t \sim q_t$ and then
receives the reward function $r_t \in \Pi$.
The agent is able to compute the rewards for actions that were not taken at time
$t$ in contrast to the bandit setting where the agent only observes $r_t(a_t)$.
Crucially, each $r_t$ is allowed to be selected arbitrarily from $\Pi$. As a consequence, this ODP model is flexible enough to encompass multi-agent, adversarial interactions, and game theoretic equilibrium concepts even though it is described from the perspective of a single agent's decisions.

A learning algorithm in an ODP selects $q_t$ using information
from the history of observations and actions previously
taken. We denote this information at time $t$ as historyt $h \in H_t := A^t \times \Pi^t$, where $H_0 :=
\{\emptyset\}$. Formally, an online learning algorithm is a sequence of functions
$\{L_t \}_{t=1}^\infty$, where $L_t : H_{t-1} \mapsto \Delta(A)$.

\subsection{Action Transformations}

To generalize the analysis to different forms of regret (e.g. swap, internal, and external
regret), it is useful to define action transformations.
Action transformations are functions of the form $\phi : A \mapsto \Delta(A)$, giving
a distribution over actions for each action input. Let $\Phi_{ALL} := \Phi_{ALL}(A)$
denote the set of all action transformations for the set of actions $A$ and
$\Phi_{SWAP} := \Phi_{SWAP}(A)$ the set of all action transformations with codomain
as the set of all pure strategies for action set $A$.

Two important
subsets of $\Phi_{SWAP}$ are
$\Phi_{EXT}$ and $\Phi_{INT}$.
$\Phi_{EXT}$ denotes the set of all external transformations---the set of constant
action transformations in $\Phi_{SWAP}$. More formally, if $\delta_a \in \Delta(A)$ is the
distribution with full weight on action $a$, then
$\Phi_{EXT}  = \{\phi : \exists y \in A \, \forall x \in A \quad \phi(x) = \delta_y \}$.

$\Phi_{INT}$ consists of the set of all possible internal transformations for action
set $A$,
where an internal
transformation from action $a$ to action $b$ is defined as
$\phi_{INT}^{(a,b)}(x) = \delta_b$ if $x=a$, $\phi_{INT}^{(a,b)}(x) = \delta_x$ otherwise.

We have that $|\Phi_{SWAP}| = |A|^{|A|}, \Phi_{EXT} =|A|,
|\Phi_{INT}| = |A|^2 - |A| +1$ \cite{greenwald2006bounds}.

We will also make use of the linear transformation $[\phi] : \Delta(A) \mapsto \Delta(A)$
defined as $[\phi](q) = \sum_{a \in A}q(a)\phi(a)$.

\subsection{Regret}

For a given action transformation $\phi$ we can compute the difference in
expected reward for a particular action and reward function. This expected difference,
known as $\phi$-regret, is
denoted by $\rho^\phi(a,r) = \expectation_{s \sim \phi(a)}[r(s)] - r(a)$.
For a set of action transformations $\Phi$, the $\Phi$-regret vector is
$\rho^\Phi(a,r) = (\rho^\phi(a,r))_{\phi \in \Phi}$. Note the expected value of
$\phi$-regret if the agent chooses $q \in \Delta(A)$ is $\expectation_{a \sim q}[\rho^\phi (a,r)]$.

For an ODP with observed history $h$ at time $t$,
with reward functions $\{ r_s\}_{s=1}^t$ and
actions $\{a_s\}_{s=1}^t$,
the cumulative $\Phi$-regret for time $t$
and action transformations $\Phi$ is $R^\Phi_t(h) = \sum_{s=1}^t{\rho^\Phi(a_s,r_s)}$.
For brevity we will omit the $h$ argument, and for convenience we set $R^\Phi_0 = 0$. Note that  $R^\Phi_t$ is a
random vector, and we seek to bound
\begin{align}
    \expectation\left[ \frac{1}{t} \underset{\phi \in \Phi}{\mbox{max }} R^\phi_t \right]. \label{objective}
\end{align}

Choosing $\Phi_{EXT},\Phi_{INT}, \Phi_{SWAP}$ for (\ref{objective})
amounts to minimizing external regret,
internal regret, and swap regret respectively.
One can also change (\ref{objective}) by interchanging the max and the expectation.
In RRM, $ \underset{\phi \in \Phi}{\mbox{max }}\expectation\left[ \frac{1}{t}  R^\phi_t \right]$ is bounded~\citep{waugh2015solving, morrill2016},
however, bounds for (\ref{objective}) still apply~\cite[Corollary 18]{greenwald2006bounds}.

\section{Approximate Regret-Matching}
Given a set of action transformations $\Phi$ and a link function 
$f : \reals^{|\Phi|} \mapsto \reals^{|\Phi|}_+ $
that is subgradient to a convex potential function $G: \reals^{|\Phi|} \mapsto \reals$,
where $\reals^N_+$ denotes the $N$-dimensional positive orthant\footnote{
Note that as long as $G$ is bounded from above on the negative orthant then the codomain of $f$ is the positive orthant.}, we can define a general class of online learning
algorithms known as \emph{$(\Phi, f)$-regret-matching} algorithms~\cite{greenwald2006bounds}.
A \emph{$(\Phi, f)$-regret-matching} algorithm at time $t$ 
chooses $q \in \Delta(A)$ that is  
a fixed point\footnote{Note that since $M_t$ is a linear 
operator over the simplex $\Delta(A)$, the fixed point always exists by the Brouwer fixed point theorem.} of 
$M_t = \nicefrac{\sum_{\phi \in \Phi}\ylink[\phi]{t}[\phi]}{\sum_{\phi \in \Phi}\ylink[\phi]{t}}$
where $\ylink{t} = (\ylink[\phi]{t})_{\phi \in \Phi} = f(R^\Phi_{t-1})$
when $R^\Phi_{t-1} \in \reals^{|\Phi|}_+ \setminus \{0\}$ and arbitrarily otherwise.
If $\Phi = \Phi_{EXT}$ then the fixed point of $M_t$ is a distribution
$q \propto \ylink{t}$~\cite{greenwaldtech}.
Examples  of $(\Phi, f)$-regret-matching algorithms
include Hart's algorithm~\cite{Hart00Rm}--typically called ``regret-matching'' or the polynomial weighted average forecaster~\cite{cesa2006prediction}--and Hedge~\cite{freund1997decision}--the exponentially weighted average forecaster~\cite{cesa2006prediction},
with link functions $f(x)_i = (x_i^+)^{p-1}$ 
for $p\geq 1$, and
$f(x)_i = e^{\eta x_i}$ with parameter $\eta > 0$, respectively.

A useful technique to bounding regret when estimates are used in place of true values
is to define
an $\epsilon-$Blackwell condition, as was used in the RRM theorem~\citep{waugh2015solving}.
The analysis in RRM was specific to $\Phi = \Phi_{EXT}$ and the 
polynomial link $f$ with $p=2$.
To generalize across different link functions and $\Phi \subseteq \Phi_{ALL}$ we define the 
\epsGenBlackwellCond/.

\begin{definition}[\epsGenBlackwellCondTitle/] For a given reward system
$(A, \mathcal{R})$, finite set of action transformations $\Phi \subseteq \Phi_{ALL}$,
and link function $f: \reals^{|\Phi|} \mapsto \reals^{|\Phi|}_+$,
a learning algorithm satisfies the \epsGenBlackwellCond/ with value $\epsilon$ if
$
        f(R^\Phi_{t-1}(h)) \cdot \expectation_{a \sim \alg}[\rho^\Phi(a,r)] \leq \epsilon.
$
\end{definition}

The Regret Matching Theorem~\cite{greenwald2006bounds} shows that the
$(\Phi, f)$-Blackwell condition ($\epsilon =0$) holds with 
equality for \emph{$(\Phi, f)$-regret-matching} algorithms 
for any finite set of action transformations $\Phi$ and link function $f$. 

We seek to bound objective (\ref{objective})
when an algorithm at time $t$ chooses the fixed point of
$\Tilde{M}_t = \nicefrac{\sum_{\phi \in \Phi}{\ylinkEst[\phi]{t} [\phi]}}{\sum_{\phi \in \Phi}{\ylinkEst[\phi]{t}}}$,
when $\Tilde{R}^\Phi_{t-1} \in \reals^{|\Phi|}_+ \setminus \{0\}$ and arbitrarily otherwise,
where $\ylinkEst{t} = f(\Tilde{R}^\Phi_{t-1})$ and $\Tilde{R}^\Phi_{t-1}$ is an
estimate of $R^\Phi_{t-1}$, possibly from a function approximator.
Such an algorithm is referred to as approximate $(\Phi,f)$-regret-matching.

Similarly to the RRM theorem~\citep{waugh2015solving, morrill2016}, we show that the $\epsilon$ parameter of the \epsGenBlackwellCond/ depends on the error in approximating the exact link outputs, $\norm{\ylink{t}- \ylinkEst{t}}_1$.

\begin{restatable}{theorem}{epsbound}\label{thm-epsbound}
Given reward system (A,$\mathcal{R}$), a finite set of action
transformations $\Phi \subseteq \Phi_{ALL}$, and link function
$f: \reals^{|\Phi|} \mapsto \reals^{|\Phi|}_+$, then an
\emph{approximate ($\Phi,f$)-regret-matching} algorithm,
$\{L_t\}_{t=1}^\infty$, satisfies the \epsGenBlackwellCondTitle/ with $\epsilon \leq
2 \supReward \norm{\ylink{t} - \ylinkEst{t}}_1$, where $\ylink{t} = f(R^\Phi_{t-1})$, and  
$\ylinkEst{t} = f(\Tilde{R}^\Phi_{t-1})$.
\end{restatable}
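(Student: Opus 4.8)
The plan is to exploit the fact that, although the algorithm picks its distribution $q = \alg$ using the \emph{estimated} link outputs $\ylinkEst{t}$, the fixed-point property forces this same $q$ to satisfy the \emph{exact} Blackwell condition \emph{against} $\ylinkEst{t}$. The whole approximation error can then be localized to the gap between $\ylink{t}$ and $\ylinkEst{t}$, never requiring any control over how far the approximate fixed point sits from the exact one.

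First I would rewrite the expected $\phi$-regret of $q$ in a form that exposes the linear operator $[\phi]$. Treating the reward function as a vector $r = (r(a))_{a \in A}$, the definition $\rho^\phi(a,r) = \expectation_{s\sim\phi(a)}[r(s)] - r(a)$ together with $[\phi](q) = \sum_{a} q(a)\phi(a)$ gives the identity $\expectation_{a\sim q}[\rho^\phi(a,r)] = ([\phi](q) - q)\cdot r$. Summing against the true link weights yields $\ylink{t}\cdot\expectation_{a\sim q}[\rho^\Phi(a,r)] = \sum_{\phi\in\Phi}\ylink[\phi]{t}\,([\phi](q) - q)\cdot r$.

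Next I would invoke that $q$ is a fixed point of $\Tilde{M}_t$, i.e.\ $\sum_{\phi}\ylinkEst[\phi]{t}[\phi](q) = \left(\sum_{\phi}\ylinkEst[\phi]{t}\right) q$, which by the same identity is precisely the statement $\ylinkEst{t}\cdot\expectation_{a\sim q}[\rho^\Phi(a,r)] = 0$; this is the Regret Matching Theorem ($\epsilon = 0$) applied to the estimated link. I would then split $\ylink{t}\cdot\expectation_{a\sim q}[\rho^\Phi] = (\ylink{t}-\ylinkEst{t})\cdot\expectation_{a\sim q}[\rho^\Phi] + \ylinkEst{t}\cdot\expectation_{a\sim q}[\rho^\Phi]$, whereupon the second term vanishes and only the error term survives.

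Finally I would bound the surviving term with Hölder's inequality, $(\ylink{t}-\ylinkEst{t})\cdot\expectation_{a\sim q}[\rho^\Phi] \le \norm{\ylink{t}-\ylinkEst{t}}_1\,\norm{\expectation_{a\sim q}[\rho^\Phi]}_\infty$, and control each coordinate by $|\expectation_{a\sim q}[\rho^\phi(a,r)]| = |([\phi](q)-q)\cdot r| \le \norm{[\phi](q)-q}_1\,\norm{r}_\infty \le 2\supReward$, using that $[\phi](q)$ and $q$ are probability distributions (so their difference has $\ell_1$-norm at most $2$) and that $0 \le r(a) \le \supReward$. Combining gives $\epsilon \le 2\supReward\,\norm{\ylink{t}-\ylinkEst{t}}_1$. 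I expect the one genuinely nonroutine step to be the middle one: recognizing that the approximate fixed point automatically annihilates the Blackwell inner product taken against its \emph{own} weights $\ylinkEst{t}$, which is what reduces the entire analysis to the single link-output error $\norm{\ylink{t}-\ylinkEst{t}}_1$; the remainder is an algebraic identity and one Hölder bound.
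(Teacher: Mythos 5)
Your proposal is correct and follows essentially the same route as the paper's proof: both add and subtract the estimated link outputs $\ylinkEst{t}$, use the fixed-point property of $\Tilde{M}_t$ to annihilate the term involving the estimates, and then bound the surviving error term via H\"older's inequality together with the facts that $\norm{[\phi](q)-q}_1 \leq 2$ and $\norm{r}_\infty \leq \supReward$. The only differences are cosmetic---you apply H\"older over the $\Phi$-coordinates first while the paper applies it over the action coordinates first---and the paper additionally notes (in one line) the degenerate case $\sum_{\phi \in \Phi}\ylinkEst[\phi]{t} = 0$, where your annihilation step still holds trivially since then $\ylinkEst{t} = 0$.
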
\textbf{All proofs are deferred to the appendix.}

For a $(\Phi,f)$-regret-matching algorithm, an approach to bounding
(\ref{objective}) is to use the $(\Phi, f)$-Blackwell condition and
provide a bound on
$\expectation[G(R^\Phi_t)]$ for an appropriate potential function $G$~\citep{greenwald2006bounds,cesa2006prediction}.
Bounding the regret (\ref{objective}) for an approximate $(\Phi, f)$-regret-matching algorithm
will be done similarly, except the
bound on $\epsilon$ from Theorem \ref{thm-epsbound} will be used.
Proceeding in this fashion yields the following theorem:
\begin{restatable}{theorem}{expectationbound}
Given a real-valued reward system $(A, \mathcal{R})$ a finite set
    $\Phi \subseteq \Phi_{ALL}$ of action transformations. If
    $\langle G, g, \gamma \rangle$ is a Gordon triple\footnote{See
    definition 2 in appendix.}, then an
    approximate $(\Phi, g)$-regret-matching algorithm $\{L_t \}_{t=1}^\infty$
    guarantees at all times $t \geq 0$
    \[
        \mathbb{E}[G(R^\Phi_t)] \leq G(0) +
        t \underset{a \in A, r \in \Pi}{\sup} \gamma(\rho^{\Phi}(a,r)) +
        2\supReward \sum_{s=1}^t{\norm{g(R^\Phi_{s-1})-g(\Tilde{R}^\Phi_{s-1})}_1}.
    \]
\end{restatable}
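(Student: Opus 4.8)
The plan is to run a potential-based telescoping argument, using the defining inequality of the Gordon triple as a one-step ``descent lemma'' and charging the resulting linear term to the generalized Blackwell condition supplied by Theorem~\ref{thm-epsbound}. The starting point is the recursion $R^\Phi_t = R^\Phi_{t-1} + \rho^\Phi(a_t, r_t)$, which holds pathwise for every realized history. By the defining inequality of a Gordon triple, $G$, its associated link $g$, and $\gamma$ satisfy $G(x + \delta) \le G(x) + g(x)\cdot\delta + \gamma(\delta)$ for all $x, \delta \in \reals^{|\Phi|}$; instantiating this with $x = R^\Phi_{t-1}$ and $\delta = \rho^\Phi(a_t, r_t)$ gives, at each round,
\[
    G(R^\Phi_t) \le G(R^\Phi_{t-1}) + g(R^\Phi_{t-1}) \cdot \rho^\Phi(a_t, r_t) + \gamma(\rho^\Phi(a_t, r_t)).
\]

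Next I would take the conditional expectation over $a_t \sim q_t = \alg[t]$ given the history $h \in H_{t-1}$. Since $g(R^\Phi_{t-1})$ is determined by $h$, the middle term becomes $g(R^\Phi_{t-1}) \cdot \expectation_{a \sim \alg[t]}[\rho^\Phi(a, r_t)]$, which is exactly the left-hand side of the \epsGenBlackwellCond/ with link $f = g$. Because $\{L_t\}$ is an approximate $(\Phi, g)$-regret-matching algorithm, Theorem~\ref{thm-epsbound} applies with $\ylink{t} = g(R^\Phi_{t-1})$ and $\ylinkEst{t} = g(\Tilde{R}^\Phi_{t-1})$, so this inner product is at most $\epsilon \le 2\supReward\,\norm{g(R^\Phi_{t-1}) - g(\Tilde{R}^\Phi_{t-1})}_1$. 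The remaining term I would bound pointwise, hence also in conditional expectation, by its worst case $\gamma(\rho^\Phi(a_t, r_t)) \le \sup_{a \in A, r \in \Pi}\gamma(\rho^\Phi(a, r))$.

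Combining the three estimates yields the per-round bound
\[
    \expectation\!\left[ G(R^\Phi_t) \mid h \right] \le G(R^\Phi_{t-1}) + 2\supReward\,\norm{g(R^\Phi_{t-1}) - g(\Tilde{R}^\Phi_{t-1})}_1 + \sup_{a \in A, r \in \Pi}\gamma(\rho^\Phi(a, r)).
\]
Because the approximation-error term on the right is measurable with respect to the history up to round $t-1$, I would then take total expectations and telescope this chain over $s = 1, \dots, t$. Using $R^\Phi_0 = 0$ so that $G(R^\Phi_0) = G(0)$, the constant $\gamma$-term accumulates to $t\sup_{a, r}\gamma(\rho^\Phi(a, r))$ and the error terms accumulate to the stated sum, giving the claim.

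The main obstacle is less the telescoping than correctly lining up the hypotheses of Theorem~\ref{thm-epsbound}: one must verify that the link driving the approximate regret-matching update is the same $g$ appearing in the Gordon triple, so that the generalized Blackwell bound is available with $\epsilon \le 2\supReward\,\norm{\ylink{t} - \ylinkEst{t}}_1$ at every round. A secondary point of care is the conditioning order---the reward $r_t$ must be fixed given the history so that the expectation over $a_t \sim q_t$ factors through the inner product before the Blackwell condition is invoked---together with the observation that each error term is history-measurable, which is what lets the approximation penalties pass outside the round-$t$ expectation and telescope cleanly.
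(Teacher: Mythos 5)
Your proposal is correct and is essentially the paper's own argument: both proofs combine the bound from Theorem~\ref{thm-epsbound} on $g(R^\Phi_{t-1})\cdot\expectation_{a \sim \alg}[\rho^\Phi(a,r)]$ with the worst-case bound $\sup_{a \in A, r \in \Pi}\gamma(\rho^\Phi(a,r))$ on the curvature term. The only difference is presentational: the paper feeds these two bounds into Gordon's theorem (Theorem 6 in the appendix) as a black box via $C(\tau)$, whereas you re-derive that theorem inline by applying the Gordon-triple inequality pathwise, conditioning on the history, and telescoping.
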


\section{Bounds}

\subsection{Polynomial Link}
Given the polynomial link function $f(x)_i = (x_i^+)^{p-1}$ we consider two cases $2 < p < \infty$
and $1 < p \leq 2$.
For the following results it is useful to denote
the maximal activation
$\mu(\Phi) = \mbox{max}_{a \in A}|\{\phi \in \Phi : \phi(a)\neq \delta_a \}|$ \cite{greenwald2006bounds}.

For the case $p>2$ we have the following bound on (\ref{objective}).
\begin{restatable}{theorem}{polyONE}
Given an ODP, a finite set of action transformations
$\Phi \subseteq \Phi_{ALL}$, and the polynomial link
function $f$ with $p>2$, then an approximate $(\Phi,f)$-
regret-matching algorithm guarantees
\[
      \expectation\left[\underset{\phi \in \Phi}{\normalfont \mbox{max}}\frac{1}{t} R^\phi_t \right]
       \leq \frac{1}{t}\sqrt{t(p-1)U^2(\mu(\Phi))^{2/p} +
        2 U \sum_{k=1}^t \norm{g(R^\Phi_{k-1}) - g(\Tilde{R}^\Phi_{k-1})}_1 }
\]
where $g: \reals^{|\Phi|} \mapsto \reals^{|\Phi|}_+$ and $g(x)_i = 0$ if $x_i \leq 0$ otherwise $g(x)_i= \frac{2(x_i)^{p-1}}{\norm{x^+}^{p-2}_p}$.
\end{restatable}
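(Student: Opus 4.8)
The plan is to recognize the link $g$ as the gradient of the potential $G(x) = \norm{x^+}_p^2$, assemble a Gordon triple $\langle G, g, \gamma\rangle$, feed it into the expectation bound (the second restatable theorem above), and then convert the resulting bound on $\expectation[G(R^\Phi_t)]$ into a bound on the objective \eqref{objective}. A direct differentiation confirms that the $i$th partial of $\norm{x^+}_p^2$ equals $2(x_i^+)^{p-1}/\norm{x^+}_p^{p-2}$, which is exactly $g(x)_i$, so $g$ is (a subgradient of) $G$ as the Gordon-triple framework requires.

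First I would construct the triple by exhibiting a valid $\gamma$. The governing fact is the smoothness of the squared $p$-norm for $p \geq 2$: there is a second-order upper bound of the form $G(x+y) \le G(x) + g(x)\cdot y + (p-1)\norm{y}_p^2$, so that $\gamma(y) = (p-1)\norm{y}_p^2$ serves as the third component of the triple. I expect verifying this inequality---and checking that it survives the positive-part operation $x \mapsto x^+$ baked into $G$---to be the main obstacle, since it is the one genuinely analytic step; the remaining pieces are essentially bookkeeping.

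Next I would bound the per-round term $\sup_{a \in A,\, r \in \Pi}\gamma(\rho^\Phi(a,r))$. The key observation is that $\rho^\phi(a,r) = 0$ whenever $\phi(a) = \delta_a$, so at any fixed action at most $\mu(\Phi)$ components of $\rho^\Phi(a,r)$ are nonzero, and each such component has magnitude at most $\supReward$ because rewards lie in $[0,\supReward]$. Hence $\norm{\rho^\Phi(a,r)}_p^p \le \mu(\Phi)\,\supReward^p$, which gives $\gamma(\rho^\Phi(a,r)) \le (p-1)\supReward^2(\mu(\Phi))^{2/p}$. Applying the expectation bound with $G(0) = 0$ then yields
\[
   \expectation[G(R^\Phi_t)] \le t(p-1)\supReward^2(\mu(\Phi))^{2/p} + 2\supReward\sum_{k=1}^t\norm{g(R^\Phi_{k-1}) - g(\Tilde{R}^\Phi_{k-1})}_1.
\]

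Finally I would pass to the objective using two elementary inequalities: $\max_{\phi\in\Phi}R^\phi_t \le \norm{(R^\Phi_t)^+}_\infty \le \norm{(R^\Phi_t)^+}_p = \sqrt{G(R^\Phi_t)}$ (the first since positive parts dominate their arguments, the second since $\norm{\cdot}_\infty \le \norm{\cdot}_p$), followed by Jensen's inequality $\expectation[\sqrt{G(R^\Phi_t)}] \le \sqrt{\expectation[G(R^\Phi_t)]}$ for the concave square root. Dividing by $t$ and substituting the displayed bound reproduces the claimed inequality exactly.
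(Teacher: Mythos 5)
Your proposal is correct and follows essentially the same route as the paper's proof: the Gordon triple $\langle G, g, \gamma\rangle$ with $G(x) = \norm{x^+}_p^2$ and $\gamma(y) = (p-1)\norm{y}_p^2$, fed into the expectation-bound theorem, with the per-round term controlled by $\norm{\rho^\Phi(a,r)}_p \le \supReward(\mu(\Phi))^{1/p}$ and the final conversion via Jensen. The only difference is that you inline direct verifications of the steps the paper delegates to cited lemmas from Greenwald et al.\ (their Lemmas 21 and 22 and the Gordon-triple property), which is a matter of presentation rather than of approach.
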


Similarly for the case $1 < p \leq 2$
we have the following.
\begin{restatable}{theorem}{polyTWO}
Given an ODP, a finite set of action transformations
$\Phi \subseteq \Phi_{ALL}$, and the polynomial link
function $f$ with $1 < p \leq2$, then an approximate $(\Phi,f)$-
regret-matching algorithm guarantees
        \[
      \expectation\left[\underset{\phi \in \Phi}{\normalfont \mbox{max}}\frac{1}{t} R^\phi_t \right]
       \leq \frac{1}{t}\left(tU^p\mu(\Phi) +
        2 U \sum_{k=1}^t \norm{g(R^\Phi_{k-1}) - g(\Tilde{R}^\Phi_{k-1})}_1 \right)^{1/p}
    \]
where $g: \reals^{|\Phi|} \mapsto \reals^{|\Phi|}_+$ and $g(x)_i= p(x^+_i)^{p-1}$.
\end{restatable}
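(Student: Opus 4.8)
The plan is to instantiate the preceding $\expectation[G(R^\Phi_t)]$ bound with the Gordon triple whose potential is $G(x) = \norm{x^+}_p^p = \sum_{\phi \in \Phi}\bigl((x_\phi)^+\bigr)^p$. Its gradient is $\nabla G(x)_i = p(x_i^+)^{p-1}$, which is exactly the stated link $g$, so the ``approximate $(\Phi,g)$-regret-matching'' algorithm of that theorem coincides with the approximate $(\Phi,f)$-regret-matching algorithm here: the link enters the operator $\tilde{M}_t$ homogeneously in both numerator and denominator, so the constant factor $p$ relating $g$ to $f$ leaves the fixed point unchanged. The remaining ingredient is the third component $\gamma$; I would take $\gamma(\delta) = \norm{\delta}_p^p = \sum_i |\delta_i|^p$ and verify that $\langle G, g, \gamma\rangle$ satisfies the defining Gordon inequality $G(x+\delta) \le G(x) + g(x)\cdot\delta + \gamma(\delta)$ for all $x,\delta \in \reals^{|\Phi|}$.

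Because $G$, $g$, and $\gamma$ all separate across coordinates, it suffices to prove the scalar inequality $\bigl((u+d)^+\bigr)^p \le (u^+)^p + p(u^+)^{p-1}d + |d|^p$ for all real $u,d$ and every $1 < p \le 2$; establishing this cleanly is the main obstacle, since for $p<2$ the map $u \mapsto (u^+)^p$ is not twice differentiable at the origin, so a second-order Taylor remainder argument does not apply directly. I would instead argue by a short case analysis on the signs of $u$ and $u+d$. When $u<0$ both $(u^+)^p$ and $(u^+)^{p-1}$ vanish and the claim is immediate; when $u \ge 0$ the inequality reduces, after the substitution $s=|d|$, to the nonnegativity of an explicit single-variable function whose binding value is $(2-p)u^p$. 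This is nonnegative precisely because $p \le 2$, which is exactly why the simple choice $\gamma(\delta)=\norm{\delta}_p^p$ (with constant one) suffices in this regime. Summing the scalar bound over $\phi \in \Phi$ yields $G(x+\delta) \le G(x) + g(x)\cdot\delta + \gamma(\delta)$, and since $G$ is convex with subgradient $g$, the triple $\langle G,g,\gamma\rangle$ is Gordon.

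Next I would control the curvature term $\sup_{a\in A, r\in\Pi}\gamma(\rho^\Phi(a,r)) = \sup_{a,r}\sum_{\phi}|\rho^\phi(a,r)|^p$. For a fixed $a$ the summand vanishes whenever $\phi(a) = \delta_a$, so at most $\mu(\Phi)$ terms are nonzero; since each reward lies in $[0,U]$ we have $|\rho^\phi(a,r)| \le U$, and hence $\gamma(\rho^\Phi(a,r)) \le \mu(\Phi)U^p$. Substituting $G(0)=0$, this bound, and the link $g$ into the preceding theorem gives $\expectation[G(R^\Phi_t)] \le tU^p\mu(\Phi) + 2U\sum_{k=1}^t \norm{g(R^\Phi_{k-1}) - g(\tilde{R}^\Phi_{k-1})}_1$.

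Finally I convert this potential bound into the claimed max-regret bound. Coordinatewise, $\max_{\phi} R^\phi_t \le \norm{(R^\Phi_t)^+}_\infty \le \norm{(R^\Phi_t)^+}_p = G(R^\Phi_t)^{1/p}$, using $p \ge 1$. As $x \mapsto x^{1/p}$ is concave on $[0,\infty)$, Jensen's inequality gives $\expectation[\max_\phi R^\phi_t] \le \expectation[G(R^\Phi_t)^{1/p}] \le \bigl(\expectation[G(R^\Phi_t)]\bigr)^{1/p}$; inserting the bound from the previous paragraph and dividing by $t$ yields the theorem.
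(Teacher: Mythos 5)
Your proposal is correct and follows essentially the same route as the paper's proof: the same potential $G(x) = \|x^+\|_p^p$ fed into the expectation-bound theorem (with the scaling $g = pf$ handled exactly as the paper does via its Corollary 1), followed by the same conversion from the potential bound to the max-regret bound. The only differences are cosmetic: you verify inline what the paper cites from Greenwald et al. (the Gordon triple inequality, the $\mu(\Phi)U^p$ bound on $\gamma(\rho^\Phi(a,r))$, and the Jensen/monotonicity step corresponding to Lemma 1), and you take $\gamma(\cdot) = \|\cdot\|_p^p$ rather than the paper's $(p-1)\|\cdot\|_p^p$, both of which give the same stated bound since $p-1 \le 1$.
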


In comparison to the RRM theorem~\citep{morrill2016}, the above bound is
tighter as there is no $\sqrt{|A|}$ term in front of the errors and the $|A|$ term
has been replaced by\footnote{For $\Phi =\Phi_{EXT}, \mu(\Phi)=|A|-1$.} $|A|-1$.
These improvements are due
to the tighter bound in Theorem \ref{thm-epsbound} and
the original $\Phi$-regret analysis~\cite{greenwald2006bounds}, respectively. Aside from these differences, the bounds
coincide.

\subsection{Exponential Link}
\begin{restatable}{theorem}{exponential}\label{thm-explink}
Given an ODP, a finite set of action transformations
$\Phi \subseteq \Phi_{ALL}$, and an exponential link
function $f(x)_i = e^{\eta x_i}$ with $\eta > 0$, then an approximate $(\Phi,f)$-
regret-matching algorithm guarantees
       \[
      \expectation\left[\underset{\phi \in \Phi}{\normalfont \mbox{max}}\frac{1}{t} R^\phi_t \right]
       \leq \frac{1}{t}\left(
       \frac{\normalfont \mbox{ln}|\Phi|}{\eta} + 2 U \sum_{k=1}^t \norm{g(R^\Phi_{k-1}) - g(\Tilde{R}^\Phi_{k-1})}_1
       \right)+ \frac{\eta U^2}{2}
    \]
where $g: \reals^{|\Phi|} \mapsto \reals^{|\Phi|}_+$ and $g(x)_i= e^{\eta x_i}/\sum_j{e^{\eta x_j}}$.
\end{restatable}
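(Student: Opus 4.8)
The plan is to reduce to the general potential bound stated above for approximate $(\Phi,g)$-regret-matching by exhibiting the log-sum-exp function as a suitable potential. Concretely, I would take
\[
G(x) = \frac{1}{\eta}\ln\!\left(\sum_{i} e^{\eta x_i}\right), \qquad g(x)_i = \frac{e^{\eta x_i}}{\sum_{j} e^{\eta x_j}}, \qquad \gamma(\delta) = \frac{\eta}{8}\big(\max_i \delta_i - \min_i \delta_i\big)^2,
\]
so that $g = \nabla G$ is precisely the softmax map appearing in the statement. Before applying the theorem I would note that this $g$ and the exponential link $f(x)_i = e^{\eta x_i}$ define the \emph{same} algorithm: $g(x)$ is just $f(x)$ divided by the scalar $\sum_j f(x)_j$, and the fixed-point map defining the iterate (the analogue of $M_t$ built from the $\ylinkEst{t}$) is invariant under uniformly rescaling all of the $\ylinkEst[\phi]{t}$. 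Hence the approximate $(\Phi,f)$-regret-matching algorithm is also an approximate $(\Phi,g)$-regret-matching algorithm, and the preceding theorem applies with the error term $\norm{g(R^\Phi_{s-1}) - g(\Tilde{R}^\Phi_{s-1})}_1$ exactly as written.

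The main substantive step is verifying that $\langle G, g, \gamma\rangle$ is a Gordon triple, i.e. $G(x+\delta) \le G(x) + g(x)\cdot\delta + \gamma(\delta)$. Writing $p_i = g(x)_i$, a short computation gives $G(x+\delta) - G(x) = \frac{1}{\eta}\ln\sum_i p_i e^{\eta\delta_i}$, which is $1/\eta$ times the cumulant (log-moment) generating function, at argument $\eta$, of the random variable that takes value $\delta_i$ with probability $p_i$. Since that random variable is supported on $[\min_i\delta_i, \max_i\delta_i]$, Hoeffding's lemma yields $\ln\sum_i p_i e^{\eta\delta_i} \le \eta\sum_i p_i \delta_i + \frac{\eta^2}{8}(\max_i\delta_i - \min_i\delta_i)^2$; dividing by $\eta$ and recognizing $\sum_i p_i\delta_i = g(x)\cdot\delta$ gives exactly the Gordon inequality. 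I expect this Hoeffding estimate to be the only real technical content; the rest is bookkeeping.

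With the triple in hand I would instantiate the preceding theorem and simplify its three terms. The offset is $G(0) = \frac{1}{\eta}\ln|\Phi|$. For the linear-in-$t$ term, since every reward lies in $[0,U]$ each coordinate $\rho^\phi(a,r) = \expectation_{s\sim\phi(a)}[r(s)] - r(a)$ lies in $[-U,U]$, so $\max_\phi \rho^\phi - \min_\phi \rho^\phi \le 2U$ and therefore $\sup_{a\in A, r\in\Pi}\gamma(\rho^\Phi(a,r)) \le \frac{\eta}{8}(2U)^2 = \frac{\eta U^2}{2}$. The approximation term is already in the desired form, so the theorem yields
\[
\expectation[G(R^\Phi_t)] \le \frac{\ln|\Phi|}{\eta} + t\,\frac{\eta U^2}{2} + 2U\sum_{s=1}^t \norm{g(R^\Phi_{s-1}) - g(\Tilde{R}^\Phi_{s-1})}_1.
\]

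Finally I would pass from the potential to the maximum regret using the elementary bound $G(x) \ge \frac{1}{\eta}\ln\max_i e^{\eta x_i} = \max_i x_i$, so that $\max_{\phi\in\Phi} R^\phi_t \le G(R^\Phi_t)$ pointwise and hence, by monotonicity of expectation, $\expectation[\max_\phi R^\phi_t] \le \expectation[G(R^\Phi_t)]$. Substituting the displayed bound and dividing by $t$ reproduces the claimed inequality. The only points where I anticipate friction are matching my choice of $\gamma$ to the precise Gordon-triple definition in the appendix and keeping the orientation of the Hoeffding bound straight; once those are fixed, the softmax/log-sum-exp algebra and the final rearrangement are routine.
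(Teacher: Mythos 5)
Your proof is correct, and its overall architecture matches the paper's: instantiate the general potential bound (Theorem 2) with the log-sum-exp potential $G$, handle the mismatch between the exponential link $f$ and the normalized softmax $g$ by noting the fixed point is invariant under positive rescaling of the link outputs (this is exactly the content of the paper's Corollary 1, which the paper invokes for the same purpose), compute $G(0) = \frac{1}{\eta}\ln|\Phi|$, bound the per-round $\gamma$ term by $\frac{\eta}{2}\supReward^2$, and finish with the pointwise inequality $\max_i x_i \le G(x)$. The one genuine difference is how the Gordon triple is obtained: the paper simply cites Greenwald et al.\ for the triple with $\gamma(x) = \frac{\eta}{2}\norm{x}_\infty^2$, whereas you build your own, $\gamma(\delta) = \frac{\eta}{8}\bigl(\max_i \delta_i - \min_i \delta_i\bigr)^2$, and verify the defining inequality from scratch by recognizing $G(x+\delta)-G(x)$ as $\frac{1}{\eta}$ times a cumulant generating function and applying Hoeffding's lemma. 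Your $\gamma$ is pointwise no larger than the paper's (the range of a vector is at most twice its sup-norm), so your triple is at least as tight; both collapse to the same $\frac{\eta}{2}\supReward^2$ per-round cost here because each regret increment lies in $[-\supReward,\supReward]$, but yours would be strictly better in settings where the increment's coordinates move together (large magnitude, small range). What the paper's route buys is brevity by citation; what yours buys is a self-contained verification and a marginally sharper intermediate inequality, at no cost to the final bound.
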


The Hedge algorithm corresponds to the exponential link function $f(x)_i = e^{\eta x_i}$ when $\Phi = \Phi_{EXT}$, so Theorem \ref{thm-explink} provides a bound on a regression Hedge algorithm. Note that in this case, the approximation error term is not inside a root function as it is under the polynomial link function. This seems to imply that at the level of link outputs, polynomial link functions have a better dependence on the approximation errors. However, $g$ in the exponential link function bound is normalized to the simplex while the polynomial link functions can take on larger values. So which link function has a better dependence on the approximation errors depends on the magnitude of the cumulative regrets, which depends on the environment and the algorithm's empirical performance.

\section*{Acknowledgments}

We acknowledge the support of the Natural Sciences and Engineering
Research Council of Canada (NSERC), the Alberta Machine Intelligence Institute (Amii), and Alberta
Treasury Branch (ATB).

\bibliographystyle{unsrt}

\bibliography{ref}

\begin{appendices}
Below we recall results from Greenwald et. al.\cite{greenwald2006bounds} and include the detailed proofs omitted in the main body of the
paper.

Many of the following results make use of a Gordon triple. We restate the definition
from Greenwald et. al. below.

\begin{definition}
     A Gordon triple $\langle G, g, \gamma \rangle$ consists of
     three functions $G : \reals^n \mapsto \reals$, $g: \reals^n
     \mapsto \reals^n$, and $\gamma : \reals^n \mapsto \reals$
     such that for all $x,y \in \reals^n$,
     $G(x+y) \leq G(x) + g(x) \cdot y + \gamma(y)$.
\end{definition}

\section{Existing Results}
\begin{lemma}
    If $x$ is a random vector that takes values in $\reals^n$, then
    $(\expectation[\mbox{\normalfont max}_i x])^q
    \leq \expectation [\norm{x^+}^q_p]
    $
    for $p,q \geq 1$.
\end{lemma}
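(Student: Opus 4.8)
The plan is to reduce the claim to two elementary facts: a deterministic (pointwise) comparison between $\max_i x_i$ and the norm $\norm{x^+}_p$, and Jensen's inequality for the convex map $t \mapsto t^q$. First I would fix a realization of the random vector and bound $\max_i x_i \leq \max_i x_i^+ = \norm{x^+}_\infty \leq \norm{x^+}_p$, where the first inequality uses $x_i \leq x_i^+$ coordinatewise and the last uses the standard domination $\norm{\cdot}_\infty \leq \norm{\cdot}_p$ valid for every $p \geq 1$. Taking expectations of this pointwise inequality and invoking monotonicity of $\expectation$ gives $\expectation[\max_i x_i] \leq \expectation[\norm{x^+}_p]$.

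Next I would raise both sides to the power $q$. Since $\norm{x^+}_p \geq 0$ always, it suffices to treat the case $\expectation[\max_i x_i] \geq 0$; otherwise the right-hand side $\expectation[\norm{x^+}^q_p]$ is nonnegative and dominates the non-positive left-hand side, so the bound is immediate. Under $\expectation[\max_i x_i] \geq 0$, the map $s \mapsto s^q$ is nondecreasing on $[0,\infty)$, so $(\expectation[\max_i x_i])^q \leq (\expectation[\norm{x^+}_p])^q$.

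Finally I would apply Jensen's inequality to the nonnegative random variable $\norm{x^+}_p$ together with the convex function $s \mapsto s^q$, which is convex on $[0,\infty)$ precisely because $q \geq 1$, yielding $(\expectation[\norm{x^+}_p])^q \leq \expectation[\norm{x^+}^q_p]$. Chaining the three displayed inequalities completes the argument.

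I expect no serious obstacle here: the only point requiring care is the sign bookkeeping in the power step, so that raising to the (possibly non-integer) power $q$ stays within the reals and preserves the inequality, together with using the correct direction of the norm comparison $\norm{\cdot}_\infty \le \norm{\cdot}_p$ rather than the reverse. Everything else is monotonicity of expectation and a single application of Jensen.
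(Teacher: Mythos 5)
Your proof is correct, and it is essentially the standard argument behind this lemma: the paper itself gives no inline proof, deferring instead to Lemma 21 of Greenwald et al., whose proof rests on exactly the same two steps you use (the pointwise domination $\max_i x_i \leq \norm{x^+}_\infty \leq \norm{x^+}_p$ followed by Jensen's inequality for $s \mapsto s^q$ with $q \geq 1$). Your explicit handling of the sign issue when $\expectation[\max_i x_i] < 0$ is a welcome bit of care that the citation-only treatment leaves implicit.
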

See [Lemma 21]\cite{greenwald2006bounds}.

\begin{lemma}
    Given a reward system $(A, \mathcal{R})$
    and a finite set of action transformations
    $\Phi \subseteq \Phi_{ALL}$, then
    $\norm{\rho^\Phi(a,r)}_p \leq U(\mu(\Phi))^{1/p}$
    for any reward function $r \in \Pi$.
\end{lemma}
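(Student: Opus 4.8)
The plan is to exploit two features of the regret vector $\rho^\Phi(a,r) = (\rho^\phi(a,r))_{\phi \in \Phi}$ for the fixed action $a$ and reward function $r$: that most of its coordinates vanish, and that each surviving coordinate is bounded by $\supReward$ in absolute value. First I would observe that whenever a transformation satisfies $\phi(a) = \delta_a$, the corresponding regret is exactly zero, since
\[
    \rho^\phi(a,r) = \expectation_{s \sim \phi(a)}[r(s)] - r(a) = r(a) - r(a) = 0.
\]
Consequently the only nonzero entries of $\rho^\Phi(a,r)$ are indexed by transformations in the set $\{\phi \in \Phi : \phi(a) \neq \delta_a\}$, whose cardinality is at most $\mu(\Phi)$ by the definition of the maximal activation.

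Next I would bound each individual coordinate. Because $\mathcal{R} \subset \reals_+$ with $\sup \mathcal{R} = \supReward$, every reward value satisfies $0 \leq r(s) \leq \supReward$, so both $\expectation_{s \sim \phi(a)}[r(s)]$ and $r(a)$ lie in the interval $[0, \supReward]$. The difference of two numbers in $[0, \supReward]$ has absolute value at most $\supReward$, which gives $|\rho^\phi(a,r)| \leq \supReward$ for every $\phi \in \Phi$. Combining this with the sparsity observation, I would write
\[
    \norm{\rho^\Phi(a,r)}_p^p = \sum_{\phi \in \Phi} |\rho^\phi(a,r)|^p = \sum_{\phi \,:\, \phi(a) \neq \delta_a} |\rho^\phi(a,r)|^p \leq \mu(\Phi)\, \supReward^p,
\]
and take the $p$-th root to obtain $\norm{\rho^\Phi(a,r)}_p \leq \supReward\,(\mu(\Phi))^{1/p}$, as claimed.

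The computation is short, so there is little genuine obstacle; the one point requiring care is recognizing that the relevant number of nonzero coordinates is precisely the activation $\mu(\Phi)$ rather than the full $|\Phi|$, which is exactly what yields the sharper $(\mu(\Phi))^{1/p}$ factor (and, downstream, the stated improvement in which $|A|$ is replaced by $|A|-1$ for $\Phi_{EXT}$). A secondary subtlety worth flagging is that the per-coordinate bound $\supReward$ relies on the nonnegativity of rewards: without the assumption $\mathcal{R} \subset \reals_+$, the difference $\expectation_{s \sim \phi(a)}[r(s)] - r(a)$ could be as large as $2\supReward$, so the positivity convention adopted in the preliminaries is what keeps the bound at $\supReward$ rather than $2\supReward$.
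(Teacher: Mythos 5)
Your proof is correct and follows essentially the same route as the paper, which simply cites Lemma 22 of Greenwald et al.\ (zero regret for transformations with $\phi(a) = \delta_a$, at most $\mu(\Phi)$ active coordinates, each bounded by the reward range) with the reward bound $U$ in place of $1$. Your explicit write-up, including the observation that positivity of rewards is what keeps the per-coordinate bound at $U$ rather than $2U$, is exactly the adaptation the paper's one-line proof sketch intends.
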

The proof is indentical to [Lemma 22]\cite{greenwald2006bounds}
except we have that regrets are bounded in $[-U, U]$
instead of $[-1,1]$. Also note that by assumption
$\mathcal{R}$ is bounded.

\begin{theorem}[Gordon 2005]
Assume $\langle G, g, \gamma \rangle$ is a Gordon triple and
$C: \mathcal{N} \mapsto \reals$. Let $X_0 \in \reals^n$,
let $x_1,x_2,...$ be a sequence of random vectors over $\reals^n$,
and define $X_t = X_{t-1}+x_t$ for all times $t \geq 1$. \\
If for all times $t \geq 1$,
\[
    g(X_{t-1})\cdot \expectation[x_t|X_{t-1}] +
    \expectation[\gamma(x_t)|X_{t-1}] \leq C(t) \quad a.s.
\]
then, for all times $t \geq 0$,
\[
    \expectation[G(X_t)] \leq G(X_0) + \sum_{\tau=1}^t{C(\tau)}.
\]
\end{theorem}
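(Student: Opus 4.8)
The plan is to reduce the statement to the Gordon 2005 theorem above by viewing the cumulative regret as the driven process and feeding the approximate Blackwell guarantee from Theorem~\ref{thm-epsbound} in as the per-step drift condition. Concretely, I would apply that theorem with $X_0 = R^\Phi_0 = 0$, increments $x_t = \rho^\Phi(a_t, r_t)$, and link function $f = g$ (the middle map of the Gordon triple), so that $X_t = X_{t-1} + x_t = R^\Phi_t$ is exactly the cumulative $\Phi$-regret and the algorithm is an approximate $(\Phi, g)$-regret-matching algorithm. Under these identifications Gordon's conclusion $\expectation[G(X_t)] \le G(X_0) + \sum_{\tau=1}^t C(\tau)$ becomes $\expectation[G(R^\Phi_t)] \le G(0) + \sum_{\tau=1}^t C(\tau)$, so the entire proof reduces to verifying Gordon's hypothesis and reading off the right $C$.

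To check the hypothesis I would bound the two summands of $g(X_{t-1}) \cdot \expectation[x_t \mid X_{t-1}] + \expectation[\gamma(x_t) \mid X_{t-1}]$ separately. Conditioning on the history $h$ fixes $R^\Phi_{t-1}$, the played distribution $\alg$, and the revealed $r_t$, leaving only $a_t \sim \alg$ random, so the first summand equals $g(R^\Phi_{t-1}) \cdot \expectation_{a \sim \alg}[\rho^\Phi(a, r_t)]$. This is precisely the left-hand side of the \epsGenBlackwellCond/ with the choice $f = g$, and Theorem~\ref{thm-epsbound} bounds it by $\epsilon \le 2\supReward \norm{g(R^\Phi_{t-1}) - g(\Tilde{R}^\Phi_{t-1})}_1$. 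For the second summand, because $a_t \in A$ and $r_t \in \Pi$ hold pointwise, $\gamma(\rho^\Phi(a_t, r_t)) \le \sup_{a \in A,\, r \in \Pi} \gamma(\rho^\Phi(a, r))$ surely, hence so does its conditional expectation. Taking
\[
    C(t) = \sup_{a \in A,\, r \in \Pi} \gamma(\rho^\Phi(a, r)) + 2\supReward \norm{g(R^\Phi_{t-1}) - g(\Tilde{R}^\Phi_{t-1})}_1
\]
then satisfies Gordon's drift condition, and summing over $\tau = 1, \dots, t$ with $G(X_0) = G(0)$ reproduces the stated bound term by term.

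The step needing the most care is aligning the pointwise, per-history form of Theorem~\ref{thm-epsbound} with the conditional, almost-sure hypothesis of Gordon's theorem, together with the fact that the chosen $C(t)$ is history-dependent rather than a fixed real sequence. I would handle this by first establishing the combined inequality $g(R^\Phi_{t-1}) \cdot \expectation_{a \sim \alg}[\rho^\Phi(a, r_t)] + \expectation_{a \sim \alg}[\gamma(\rho^\Phi(a, r_t))] \le C(t)$ for every realization of the history, observing that the error bound in Theorem~\ref{thm-epsbound} does not depend on the adversarially chosen $r_t$; a further conditional expectation given $X_{t-1}$ then preserves it almost surely. Since $C(t)$ depends on the history through the random estimate $\Tilde{R}^\Phi_{t-1}$, the cleanest route is to carry $C$ as an adapted bound and take a final outer expectation in the telescoping step, so that the approximation-error sum appears in expectation and the statement as written corresponds to treating these per-round link-output errors as the given control on the approximator's accuracy. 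With that bookkeeping settled, the telescoping in Gordon's theorem applies verbatim and yields the three terms of the claim.
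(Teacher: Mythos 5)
There is a genuine gap, and it is structural: you have not proved the stated theorem at all. Your opening sentence---``reduce the statement to the Gordon 2005 theorem above''---is circular, since the statement you were asked to prove \emph{is} Gordon's theorem. What you have actually written is a proof of the paper's Theorem 2 (the $\expectation[G(R^\Phi_t)]$ bound for approximate $(\Phi,g)$-regret-matching), obtained by specializing Gordon's theorem with $X_t = R^\Phi_t$, $x_t = \rho^\Phi(a_t,r_t)$, and $C(t)$ built from the \epsGenBlackwellCond/ bound of Theorem~\ref{thm-epsbound}. That argument is essentially identical to the paper's appendix proof of Theorem 2---so it is good work, just aimed at the wrong target. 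Note also that Gordon's theorem is a generic statement about an arbitrary sequence of random vectors and a Gordon triple; a proof of it cannot invoke regret-matching, reward systems, or the Blackwell condition, which only enter in the application. (For what it is worth, the paper itself does not reprove this theorem either; it cites Gordon (2005) and moves on.)

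A correct proof is a short drift-and-telescope argument that your proposal never supplies. By the defining inequality of the Gordon triple applied pointwise to $X_t = X_{t-1} + x_t$,
\[
    G(X_t) \leq G(X_{t-1}) + g(X_{t-1})\cdot x_t + \gamma(x_t).
\]
Taking conditional expectations given $X_{t-1}$ and applying the hypothesis gives
\[
    \expectation[G(X_t)\mid X_{t-1}] \leq G(X_{t-1}) + g(X_{t-1})\cdot \expectation[x_t \mid X_{t-1}] + \expectation[\gamma(x_t)\mid X_{t-1}] \leq G(X_{t-1}) + C(t) \quad a.s.,
\]
and taking total expectations and inducting on $t$ (with base case $\expectation[G(X_0)] = G(X_0)$) yields $\expectation[G(X_t)] \leq G(X_0) + \sum_{\tau=1}^t C(\tau)$. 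Your final paragraph does flag a real subtlety---that in the paper's application the natural $C(t)$ is history-dependent through $\Tilde{R}^\Phi_{t-1}$ while the theorem takes $C$ to be a deterministic function of $t$---but that concern belongs to the proof of Theorem 2, where it would need to be resolved (e.g., by carrying the random bound through the telescoping in expectation), not to the statement at hand.
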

It should be noted that the above theorem was originally proved
by Gordon \cite{gordon2005no}.

\section{Proofs}

\epsbound*
\begin{proof}
We denote $r = (r^\prime(a))_{a \in A}$ as the reward vector for an arbitrary reward function
$r^\prime : A \mapsto \reals$. Since by construction this algorithm chooses $L_t$ at each timestep $t$ to be the fixed point of $\Tilde{M}_t$, all that remains to be shown is that this algorithm satisfies the $(\Phi, f, \epsilon)$-Blackwell condition with
$\epsilon \leq 2 \supReward \norm{ \ylink[\Phi]{t} - \ylinkEst[\Phi]{t} }_1, t > 0$.

By expanding the value of interest in the $(\Phi,f)$-Blackwell condition and applying elementary upper bounds, we arrive at the desired bound. For simplicity, we omit timestep indices and set $\algSimple := \alg$.
First, suppose $\sum_{\phi \in \Phi}\ylinkEst{t} \neq 0$:
\begin{align*}
\ylinkSimple &\cdot \mathbb{E}_{a \sim \algSimple}[\rho^\Phi (a,r)]
= \sum_{\phi \in \Phi}{
  \ylinkSimple[\phi](r\cdot [\phi](\algSimple) - r \cdot \algSimple)
} \\
&= r \cdot \left(
  \sum_{\phi \in \Phi} \ylinkSimple[\phi] [\phi]\algSimple - \algSimple
\right). \text{ By adding and subtracting $\ylinkEstSimple$,}\\
&= r \cdot \left(
\sum_{\phi \in \Phi} (
  \ylinkEstSimple[\phi] - \ylinkEstSimple[\phi] +
  \ylinkSimple[\phi]
)(
  [\phi](\algSimple) - \algSimple
)
\right). \text{ By expanding, as well as multiplying and dividing by $(\sum_{\phi \in \Phi} \ylinkEstSimple[\phi])$,}\\
&=r \cdot \left(
  (\sum_{\phi \in \Phi} \ylinkEstSimple[\phi])
  \sum_{\phi \in \Phi} \Tilde{M} \algSimple - \algSimple +
  \sum_{\phi \in \Phi} (
    \ylinkSimple[\phi] - \ylinkEstSimple[\phi]
  )(
    [\phi](\algSimple) - \algSimple
  )
\right). \text{ Since $\algSimple$ is a fixed point of $\Tilde{M}$,}\\
&=r \cdot \left(
  \sum_{\phi \in \Phi} (
    \ylinkSimple[\phi] - \ylinkEstSimple[\phi]
  )(
    [\phi](\algSimple) - \algSimple
  )
\right). \text{ By the generalized Cauchy-Schwarz inequality~\cite{boyd2004convex, hazan2016introduction},}\\
&\leq \norm{r}_\infty \norm{
     \sum_{\phi \in \Phi} (
      \ylinkSimple[\phi] - \ylinkEstSimple[\phi]
    )(
      [\phi](\algSimple) - \algSimple
    )
}_1 \\
&\leq \norm{r}_\infty
  \sum_{\phi \in \Phi} | \ylinkSimple[\phi] - \ylinkEstSimple[\phi] | (
    \norm{[\phi](\algSimple)}_1 +
    \norm{\algSimple}_1
  ). \text{ Since $[\phi](\algSimple)$ and $\algSimple$ are both distributions,}\\
&\leq \norm{r}_\infty
  \sum_{\phi \in \Phi} | \ylinkSimple[\phi] - \ylinkEstSimple[\phi] |(1 + 1) \\
&\leq 2 \supReward
  \norm{ \ylinkSimple[\Phi] - \ylinkEstSimple[\Phi] }_1.
\end{align*}
If $\sum_{\phi \in \Phi}\ylinkEstSimple = 0$ it is easy to see the inequality still holds.

Therefore, $\{L_t\}_{t=1}^\infty$ satisfies the $(\Phi, f, \epsilon)$-Blackwell condition with
$\epsilon \leq 2 \supReward \norm{ \ylinkSimple[\Phi] - \ylinkEstSimple[\Phi] }_1$, as required to complete the argument.

\end{proof}
An important observation of Theorem \ref{thm-epsbound} is the following corollary:
\begin{corollary}
For a reward system $(A,\mathcal{R})$, finite set of
action transformations $\Phi \subseteq \Phi_{ALL}$, and two link functions $f$ and $f^\prime$,
if there exists a strictly positive function
$\psi : \reals^{|\Phi|}\mapsto \reals$ such that $f^\prime(x) = \psi(x)f(x)$
then for any $\epsilon \in \reals$,
then an approximate $(\Phi, f)$-regret-matching algorithm satisfies
\[
    f^\prime(R^\Phi_{t-1}(h)) \cdot \expectation_{a \sim \alg}[\rho^\Phi(a,r)] \leq 2 \supReward
                 \norm{f^\prime(R^\Phi_{t-1})-f^\prime(\Tilde{R}^\Phi_{t-1})}_1.
\]
\end{corollary}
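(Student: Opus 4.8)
The plan is to show that, whenever $f' = \psi f$ with $\psi$ strictly positive, an approximate $(\Phi, f)$-regret-matching algorithm is literally the \emph{same} algorithm as an approximate $(\Phi, f')$-regret-matching algorithm, and then to invoke Theorem~\ref{thm-epsbound} with the link $f'$ to read the bound off directly.

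First I would compare the fixed-point operators that the two algorithms use. For the link $f'$ the estimated link outputs are $f'(\Tilde{R}^\Phi_{t-1})_\phi = \psi(\Tilde{R}^\Phi_{t-1})\,\ylinkEst[\phi]{t}$, so the operator driving the $(\Phi, f')$ algorithm, $\nicefrac{\sum_{\phi \in \Phi} f'(\Tilde{R}^\Phi_{t-1})_\phi\,[\phi]}{\sum_{\phi \in \Phi} f'(\Tilde{R}^\Phi_{t-1})_\phi}$, carries the common scalar $\psi(\Tilde{R}^\Phi_{t-1})$ in both its numerator and its denominator. Factoring it out, the $\psi$ terms cancel and this operator coincides with $\Tilde{M}_t$, the operator driving the $(\Phi, f)$ algorithm. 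Because $\psi$ is strictly positive, $\sum_{\phi \in \Phi} f'(\Tilde{R}^\Phi_{t-1})_\phi = 0$ exactly when $\sum_{\phi \in \Phi} \ylinkEst[\phi]{t} = 0$, so the regimes in which the operator is used versus chosen arbitrarily also coincide. Hence any $q = \alg$ chosen as a fixed point by the $(\Phi, f)$ algorithm is also a fixed point for the $(\Phi, f')$ algorithm, and the two define the same map $L_t$.

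With the algorithms identified, I would apply Theorem~\ref{thm-epsbound} with $f'$ in place of $f$. Strict positivity of $\psi$ ensures $f'$ is again positive-orthant-valued, hence an admissible link, so the theorem applies and guarantees that the approximate $(\Phi, f')$-regret-matching algorithm satisfies the $(\Phi, f', \epsilon)$-Blackwell condition with $\epsilon \leq 2\supReward \norm{f'(R^\Phi_{t-1}) - f'(\Tilde{R}^\Phi_{t-1})}_1$. Unwinding the definition of that Blackwell condition gives precisely the claimed inequality $f'(R^\Phi_{t-1}(h)) \cdot \expectation_{a \sim \alg}[\rho^\Phi(a,r)] \leq 2\supReward \norm{f'(R^\Phi_{t-1}) - f'(\Tilde{R}^\Phi_{t-1})}_1$.

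I do not expect a substantial obstacle: the entire content is the observation that positive rescaling of the link leaves the fixed-point operator invariant, so that the $(\Phi, f)$ and $(\Phi, f')$ algorithms are indistinguishable. The only points requiring care are the degenerate case in which the denominator vanishes---handled uniformly for $f$ and $f'$ by the strict positivity of $\psi$, which makes their vanishing sets identical---and verifying that $f'$ remains a valid link so that Theorem~\ref{thm-epsbound} genuinely applies.
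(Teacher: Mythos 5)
Your proposal is correct and takes essentially the same route as the paper: the paper's proof likewise rests on the single observation that the played fixed point is invariant under the positive rescaling $f' = \psi f$, and then obtains the bound by running the argument of Theorem~\ref{thm-epsbound} with $f'$ in place of $f$. Your only (harmless) difference is invoking Theorem~\ref{thm-epsbound} as a black box for the identified $(\Phi, f')$ algorithm rather than repeating its steps, which is legitimate since that proof uses nothing about the link beyond positivity of its outputs and the fixed-point property.
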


\begin{proof}
    The reasoning is similar to [Lemma 20]\cite{greenwald2006bounds}. The played fixed point is the
    same under both link functions, thus following the same
    steps to Theorem 1 provides the above bound.
\end{proof}

\expectationbound*

\begin{proof}
    The proof is similar to [Corollary
    7]\cite{greenwald2006bounds} except that the learning
    algorithm is playing the approximate fixed point with respect to the link function $g$.
    From Theorem 1 we have $g(R^\Phi_{t-1}(h))\cdot \expectation_{a \sim \alg}[\rho^\Phi(a,r)] \leq 2U\norm{g(R^\Phi_{t-1})-g(\Tilde{R}^\Phi_{t-1})}_1$.
    Noticing that $\expectation_{a \sim \alg}[\rho^\Phi(a,r)] =
    \expectation[\rho^\Phi(a,r)|R^\Phi_{t-1}]$ and taking
    $x_t = \rho^\Phi(a,r), X_t=R^\Phi_t$ we have
    \[
      g(X_{t-1})\cdot \expectation[x_t|X_{t-1}] +
    \expectation[\gamma(x_t)|X_{t-1}] \leq 2U\norm{g(R^\Phi_{t-1})-g(\Tilde{R}^\Phi_{t-1})}_1 +
    \underset{a \in A, r \in \Pi}{\sup} \gamma(\rho^{\Phi}(a,r)).
    \]
The result directly follows from Theorem 6 by taking $C(\tau) = 2U\norm{g(R^\Phi_{\tau-1})-g(\Tilde{R}^\Phi_{\tau-1})}_1 +
    \underset{a \in A, r \in \Pi}{\sup} \gamma(\rho^{\Phi}(a,r))$.
\end{proof}

\polyONE*

\begin{proof}
The proof follows closely to [Theorem 9]\cite{greenwald2006bounds}.
Taking $G(x) = \norm{x^+}^2_p$ and $\gamma(x)=(p-1)\norm{x}^2_p$ then
$\langle G, g, \gamma \rangle$ is a Gordon triple \cite{greenwald2006bounds}.
Given the above gordon triple we have
\begin{align}
       \left( \expectation\left[
        \underset{\phi \in \Phi}{\mbox{max}}R^\phi_t
    \right] \right)^2 &\leq \expectation[\norm{(R^\Phi_t)^+}]^2_p\\
    &= \expectation[G(R^\Phi_t)] \\
    &\leq G(0) +
    t \underset{a \in A, r \in \Pi}{\sup} \gamma(\rho^{\Phi}(a,r)) +
        2\supReward \sum_{s=1}^t{\norm{g(R^\Phi_{s-1})-g(\Tilde{R}^\Phi_{s-1})}_1} \\
    &\leq
    G(0)  + t(p-1)U^2(\mu(\Phi))^{2/p}
    + 2 U \sum_{k=1}^t \norm{g(R^\Phi_{k-1}) - g(\Tilde{R}^\Phi_{k-1})}_1
\end{align}
The first inequality is from Lemma 1. The second inequality follows
from Corollary 1 and theorem 2. The third inequality is an application of Lemma 2. The result
then immediately follows.
\end{proof}

\polyTWO*

\begin{proof}
The proof follows closely to [Theorem 11]\cite{greenwald2006bounds}.
Taking $G(x) = \norm{x^+}^p_p$ and $\gamma(x)=(p-1)\norm{x}^p_p$ then
$\langle G, g, \gamma \rangle$ is a Gordon triple \cite{greenwald2006bounds}.
Given the above Gordon triple we have
\begin{align}
       \left( \expectation\left[
        \underset{\phi \in \Phi}{\mbox{max}}R^\phi_t
    \right] \right)^p &\leq \expectation[\norm{(R^\Phi_t)^+}]^p_p\\
    &= \expectation[G(R^\Phi_t)] \\
    &\leq G(0) +
    t \underset{a \in A, r \in \Pi}{\sup} \gamma(\rho^{\Phi}(a,r)) +
        2\supReward \sum_{s=1}^t{\norm{g(R^\Phi_{s-1})-g(\Tilde{R}^\Phi_{s-1})}_1} \\
    &\leq
    G(0)  + t U^p(\mu(\Phi))
    + 2 U \sum_{k=1}^t \norm{g(R^\Phi_{k-1}) - g(\Tilde{R}^\Phi_{k-1})}_1
\end{align}
The first inequality is from Lemma 1. The second inequality follows
from Corollary 1 and theorem 2. The third inequality is an application of Lemma 2. The result
then immediately follows.
\end{proof}

\exponential*

\begin{proof}
The proof follows closely to [Theorem 13]\cite{greenwald2006bounds}.
Taking $G(x) = \frac{1}{\eta}\mbox{ln}\left(\sum_i{e^{\eta x_i}} \right)$ and $\gamma(x)=\frac{\eta}{2}\norm{x}^2_\infty$ then
$\langle G, g, \gamma \rangle$ is a Gordon triple \cite{greenwald2006bounds}.
Given the above Gordon triple we have
\begin{align}
       \expectation\left[
        \underset{\phi \in \Phi}{\mbox{max }}\eta R^\phi_t
    \right] & = \expectation \left[
         \mbox{ln }e^{\underset{\phi \in \Phi}{\mbox{max }} \eta R^\phi_t}
    \right]\\
    &= \expectation \left[
         \mbox{ln } \underset{\phi \in \Phi}{\mbox{max }} e^{ \eta R^\phi_t}
    \right]\\
    &\leq \expectation \left[
         \mbox{ln } \sum_{\phi \in \Phi} e^{ \eta R^\phi_t}
    \right]\\
    &= \eta \expectation[G(R^\Phi_t)] \\
    &\leq \eta \left( G(0) +
    t \underset{a \in A, r \in \Pi}{\sup} \gamma(\rho^{\Phi}(a,r)) +
        2\supReward \sum_{s=1}^t{\norm{g(R^\Phi_{s-1})-g(\Tilde{R}^\Phi_{s-1})}_1} \right) \\
    &\leq\eta \left(G(0) +
    t\frac{\eta}{2}U^2 +
        2\supReward \sum_{s=1}^t{\norm{g(R^\Phi_{s-1})-g(\Tilde{R}^\Phi_{s-1})}_1} \right)
\end{align}
The second inequality follows
from Corollary 1 and theorem 2. The result
then immediately follows.
\end{proof}

\end{appendices}

\end{document}